\newcommand{\conv}{Conv}
\renewcommand{\phi}{\varphi}
\newcommand{\tz}{\tilde{z}}
\newcommand{\E}{\mathbb{E}}
\newcommand{\R}{\mathbb{R}}
\newcommand{\cK}{\mathcal{K}}
\newcommand{\cZ}{\mathcal{Z}}
\newcommand{\cA}{\mathcal{A}}
\newcommand{\cE}{\mathcal{E}}
\def\ds1{\mathds{1}}
\renewcommand{\epsilon}{\varepsilon}
\newcommand{\argmin}{\mathop{\mathrm{argmin}}}
\renewcommand{\tilde}{\widetilde}
\newlength{\minipagewidth}
\newcommand{\bookbox}[1]{
\par\medskip\noindent
\framebox[\textwidth]{
\begin{minipage}{\minipagewidth}
{#1}
\end{minipage} } \par\medskip }
\newcommand{\beq}{\begin{equation}}
\newcommand{\eeq}{\end{equation}}
\newcommand{\beqa}{\begin{eqnarray}}
\newcommand{\eeqa}{\end{eqnarray}}
\newcommand{\beqan}{\begin{eqnarray*}}
\newcommand{\eeqan}{\end{eqnarray*}}
\def\ba#1\ea{\begin{align*}#1\end{align*}} 
\def\banum#1\eanum{\begin{align}#1\end{align}} 
\newtheorem{theorem}{Theorem}
\newcommand{\BlackBox}{\rule{1.5ex}{1.5ex}}  
\newenvironment{proof}{\par\noindent{\bf Proof\ }}{\hfill\BlackBox\\[2mm]}
\begin{document}

\title{Towards Minimax Policies for Online Linear Optimization with Bandit Feedback
}
\author{
S{\'e}bastien Bubeck \\
Department of Operations Research and Financial Engineering, \\
Princeton University \\
{\tt sbubeck@princeton.edu} \\ \\
Nicol{\`o} Cesa-Bianchi \\
Dipartimento di Scienze dell'Informazione, \\
Universit{\`a} degli Studi di Milano\\
{\tt nicolo.cesa-bianchi@unimi.it}  \\ \\
Sham M. Kakade \\
Microsoft Research New England, and Department of Statistics, \\
Wharton School, University of Pennsylvania \\
{\tt skakade@microsoft.com} 
}

\date{\today}

\maketitle

\begin{abstract}
We address the online linear optimization problem with bandit feedback. Our contribution is twofold. First, we provide an algorithm (based on exponential weights) with a regret of order $\sqrt{d n \log N}$ for any finite action set with $N$ actions, under the assumption that the instantaneous loss is bounded by $1$. This shaves off an extraneous $\sqrt{d}$ factor compared to previous works, and gives a regret bound of order $d \sqrt{n \log n}$ for any compact set of actions. Without further assumptions on the action set, this last bound is minimax optimal up to a logarithmic factor. Interestingly, our result also shows that the minimax regret for bandit linear optimization with expert advice in $d$ dimension is the same as for the basic $d$-armed bandit with expert advice. Our second contribution is to show how to use the Mirror Descent algorithm to obtain computationally efficient strategies with minimax optimal regret bounds in specific examples. More precisely we study two canonical action sets: the hypercube and the Euclidean ball. In the former case, we obtain the first computationally efficient algorithm with a $d \sqrt{n}$ regret, thus improving by a factor $\sqrt{d \log n}$ over the best known result for a computationally efficient algorithm. In the latter case, our approach gives the first algorithm with a $\sqrt{d n \log n}$ regret, again shaving off an extraneous $\sqrt{d}$ compared to previous works.
\end{abstract}

\section{Introduction}
In this paper we consider the framework of online linear optimization: at each time instance $t = 1, \hdots, n$, the player chooses, possibly in a randomized way, an action from a given compact action set $\cA \subset \R^d$. The action chosen by the player at time $t$ is denoted by $a_t \in \cA$. Simultaneously to the player, the adversary chooses a loss vector $z_t \in \cZ \subset \R^d$ and the loss incurred by the forecaster is $a_t^{\top} z_t$. The goal of the player is to minimize the expected cumulative loss $\E \sum_{t=1}^n a_t^{\top} z_t$ where the expectation is taken with respect to the player's internal randomization (and possibly the adversary's randomization). In the basic version of this problem, the player observes the adversary's move $z_t$ at the end of round $t$. We consider here the {\em bandit} version, where the player only observes the incurred loss $a_t^{\top} z_t$. As a measure of performance we define the regret of the player as
$$R_n = \E \sum_{t=1}^n a_t^{\top} z_t - \min_{a \in \cA} \E \sum_{t=1}^n a^{\top} z_t~.$$ 
In this paper we are interested in the dual setting, where the adversary plays on a dual action set, i.e., $\cA$ and $\cZ$ are such that $|a^{\top} z| \leq 1, \forall (a,z) \in \cA \times \cZ$.

\subsection{Contributions and relation to previous works}
In the full information case, the online optimization setting (for convex losses) was introduced by \cite{Zin03}. The specific online linear optimization problem with bandit feedback was first studied by \cite{MB04} and \cite{AK04}. Our first contribution to this problem is to complete the research program started by \cite{DHK08} and~\cite{CL11}. In these papers the authors studied the {\sc exp2} (Expanded Exp) algorithm, also called Geometric Hedge, Expanded Hedge, or ComBand. This strategy applies to a finite set of actions; it assigns an exponential weight to each action, and then draws an action at random from the corresponding probability distribution. Using a basic estimation procedure (first used by \cite{ACFS03} for the basic multi-armed bandit problem), one can estimate the loss vector $z_t$. However, to control the range of the estimates, one has to mix the probability given by {\sc exp2} with an "exploration distribution". \cite{DHK08} chose this distribution to be uniform over a barycentric spanner for the action set, while in \citep{CL11} the distribution was uniform over all actions. Using ideas from convex geometry, we propose a new distribution that allows us to derive a minimax optimal regret bound. More precisely, we show that for any finite action set, {\sc exp2} with the exploration distribution given by John's Theorem (see Theorem \ref{th:john}) attains a regret of order $\sqrt{d n \log N}$ for any set of $N$ actions. This improves by a factor $\sqrt{d}$ over previous works. Moreover this rate is optimal: there exists action sets (such as the hypercube) where the minimax rate is of order $d \sqrt{n}$ ---see \citep{DHK08}. Surprisingly, this result also shows that {\sc Exp2} with John's exploration can be used for linear bandits with $N$ experts to obtain a regret of order $\sqrt{d n \log N}$, which is no worse than the minimax regret for the basic $d$-armed bandit with $N$ experts problem.

While these results show that, without further assumption on the set of action, the regret of {\sc exp2} is optimal, they do not say anything about optimality for a {\em specific} set of actions. In fact, it was proven by \cite{ABL11} that for some pair $(\cA, \cZ)$ the exponential weights is a provably suboptimal strategy (with a gap of order $\sqrt{d}$). 
To address this issue, another class of algorithms has been studied for online optimization:  the Mirror Descent style algorithms of \cite{NY83} ---this class of algorithms was rediscovered in the learning community, see for example \cite{KW01}. In recent years the number of papers using Mirror Descent to solve problems in online optimization has been growing very rapidly. In the full information setting (when one observes $z_t$), we have a very good understanding of how to use Mirror Descent to obtain optimal regret bounds that adapt to the geometry of the problem ---see \citep{Rak09, Haz11, Bub11}. In particular, a recent paper suggests that in this basic setting Mirror Descent is "universal", see \citep{SST11}. On the other hand, in the limited feedback scenario the picture is much more scattered. In the particular cases of {\em semi-bandit} feedback ---see \citep{ABL11}--- and {\em two-points bandit} feedback ---see \citep{ADX10}, we know how to use Mirror Descent to obtain optimal regret bounds. However, in both scenarios the feedback is much stronger than in the more fundamental bandit problem. In this latter case, there is only one paper that successfully applies Mirror Descent, namely the seminal work of \cite{AHR08} ---see also the follow-up paper \cite{AR09}. Unfortunately, for a convex and compact set $\cA$, this approach (which combines Mirror Descent with a self-concordant barrier for the action set) leads to a regret bound of order $d \sqrt{ \theta n \log n}$ for any $\theta > 0$ such that $\cA$ admits a $\theta$-self concordant barrier. For example, in the case of the hypercube the best we know is $\theta = O(d)$, which results in the suboptimal $d^{3/2} \sqrt{n \log n}$ regret (compared to $d \sqrt{n}$ for {\sc exp2} with John's ellipsoid). However, note that in this particular case it is not known if {\sc exp2} can be implemented efficiently, while Mirror Descent is polynomial time. 

Our second main contribution is to propose an efficient algorithm based on Mirror Descent, with an optimal regret bound for two canonical pairs $(\cA, \cZ)$. Namely, the (hypercube, cross-polytope) pair, which corresponds to an $L_{\infty}/L_1$ type of constraints, and the (Euclidean ball, Euclidean ball) pair, which corresponds to an $L_2/L_2$ constraint. In the former case this results in the first computationally efficient algorithm with a regret of order $d \sqrt{n}$, while in the latter case it is the first efficient algorithm with a regret of order $\sqrt{d n \log n}$. Indeed, the approach of \cite{AHR08} only gives $d \sqrt{n \log n}$ for the pair (Euclidean ball, Euclidean ball) since there exists a $O(1)$-self concordant barrier for the Euclidean ball. Note also that this specific example was studied in \cite{AR09}, we discuss their result in Section \ref{sec:ball}.


\subsection{Outline of the paper}
The paper is organized as follows. In Section~\ref{sec:alg} we introduce the two algorithms discussed in the paper: Expanded Exp ({\sc exp2}) and Online Stochastic Mirror Descent ({\sc osmd}). In both cases we state a general regret bound.  In Section~\ref{sec:john} we detail our exploration strategy for {\sc exp2}, and show the corresponding regret bound. We also discuss briefly the extension to linear bandits with expert advice. Then in Section~\ref{sec:hypercube} (respectively Section \ref{sec:ball}) we show how to use {\sc osmd} to obtain a computationally efficient strategy with optimal regret for the hypercube (respectively for the Euclidean ball, up to a logarithmic factor).

\section{Algorithms} \label{sec:alg}
We briefly describe here the two algorithmic templates that we shall use in this paper. First, {\sc exp2} is described in Figure~\ref{fig:exp2}. The general regret bound for this algorithm is the following. The proof of this result follows a standard argument, see for example [Chapter 7, \cite{Bub11}].

\begin{figure}[t]
\bookbox{
\textbf{Algorithm:} {\em {\sc exp2} with exploration $\mu$.}
\\
\textbf{Parameters:} learning rate $\eta$; mixing coefficient $\gamma$; distribution $\mu$ over the action set $\cA$.

\medskip\noindent
Let $q_1=\big(\frac1{|\cA|},\hdots,\frac1{|\cA|}\big) \in \R^{|\cA|}$. For each round $t=1,2,\ldots,n$;
\begin{itemize}
\item[(a)]
Let $p_t = (1-\gamma) q_t + \gamma \mu$, and play $a_t \sim p_t$.
\item[(b)]
Estimate the loss vector $z_t$ by $\tilde{z_t} = P_t^+ a_t a_t^{\top} z_t,$ with $P_t = \E_{a \sim p_t}\bigl[a a^{\top}\bigr]$.
\item[(c)]
Update the exponential weights, for all $a \in \cA$, 
$$q_{t+1}(a) =\frac{\exp(- \eta a^{\top} \tz_t) q_{t}(a)}{\sum_{b \in\cA} \exp(- \eta b^{\top} \tz_t) q_{t}(b)}.$$ 
\end{itemize}
}
\caption{{\sc exp2} strategy for bandit feedback.}\label{fig:exp2}
\end{figure}

\begin{figure}[t]
\bookbox{
\textbf{Algorithm:} {\em {\sc osmd.}}
\\
\textbf{Parameters:} learning rate $\eta > 0$; regularization function $F : \R^d \rightarrow \R \cup \{+\infty\}$ with effective domain $\cA$, and such that the Legendre-Fenchel dual $F^*$ is differentiable on $\R^d$; perturbation scheme for step (a) below.

\medskip\noindent
Let $a_1 \in \argmin_{a \in \cA} F(a)$. For each round $t=1,2,\ldots,n$;
\begin{itemize}
\item[(a)] 
Play $\tilde{a}_t$ at random from some probability distribution $p_t$ over $\cA$ \\ ($\tilde{a}_t$ is a randomly perturbated version of $a_t$, see Section \ref{sec:hypercube} and Section \ref{sec:ball} for examples).
\item[(b)]
Estimate the loss vector $z_t$ by $\tilde{z}_t = P_t^+ \tilde{a}_t \tilde{a}_t^{\top} z_t,$ 
with $P_t = \E_{a \sim p_t} \bigl[a a^{\top}\bigr]$.
\item[(c)]
Let $a_{t+1}  = \nabla F^* \left( - \eta \sum_{s=1}^{t-1} \tilde{z}_s \right)$.
\end{itemize}
}
\caption{Online Stochastic Mirror Descent ({\sc OSMD}) for bandit feedback.}\label{fig:OSMD}
\end{figure}

\begin{theorem} \label{th:exp2}
Let $\cA$ be a finite set of $N$ actions. For the {\sc exp2} strategy, provided that
$\eta |a^{\top} \tilde{z}_t| \leq 1, \forall a \in \cA ,$
one has
$$R_n \leq 2 \gamma n + \frac{\log N}{\eta} + \eta\,\E \sum_{t=1}^n \sum_{a \in \cA} {p}_t(a) \bigl(a^{\top} \tilde{z}_t\bigr)^2~.
$$
\end{theorem}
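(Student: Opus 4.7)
The plan is to follow the classical exponential-weights analysis, but after first decomposing the regret along the mixture $p_t = (1-\gamma) q_t + \gamma \mu$ so that the ``exploration'' piece and the ``exponential weights'' piece can be treated separately.

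First I would write $a^* \in \argmin_{a \in \cA} \sum_t a^{\top} z_t$ and split
\[
R_n \;=\; (1-\gamma)\, \E\!\sum_t \sum_{a \in \cA} q_t(a)\, a^{\top} z_t \;-\; (1-\gamma) \sum_t a^{* \top} z_t \;+\; \gamma \sum_t \Bigl[\sum_{a \in \cA} \mu(a)\, a^{\top} z_t - a^{* \top} z_t\Bigr].
\]
The last bracket is at most $2$ in absolute value by the dual-boundedness assumption $|a^{\top} z| \le 1$, so the whole exploration term contributes at most $2\gamma n$. It remains to bound the ``exploitation'' term, which is exactly the regret of the exponential-weights distribution $q_t$ against $a^*$, but measured on the true losses $z_t$.

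The next step is to pass from true losses to estimated losses via unbiasedness of $\tilde{z}_t$. Because $\E[\tilde{z}_t \mid p_t] = P_t^{+} P_t\, z_t$ acts as the identity on the range of $P_t$, and $p_t$ has full support on $\cA$ (thanks to the mixing with $\mu$), every $a \in \cA$ satisfies $a^{\top} \E[\tilde{z}_t \mid p_t] = a^{\top} z_t$. Taking expectations therefore gives $\E \sum_t \sum_a q_t(a)\, a^{\top} z_t = \E \sum_t \sum_a q_t(a)\, a^{\top} \tilde{z}_t$ and $\sum_t a^{* \top} z_t = \E \sum_t a^{* \top} \tilde{z}_t \ge \E \min_{a \in \cA} \sum_t a^{\top} \tilde{z}_t$.

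Now I invoke the standard exponential-weights lemma (as in e.g.\ Chapter~7 of \cite{Bub11}): under the assumption $\eta |a^{\top}\tilde{z}_t| \le 1$ one can use $e^{-x} \le 1 - x + x^2$ inside the log-partition potential $\Phi_t = \log \sum_a \exp\bigl(-\eta \sum_{s \le t} a^{\top}\tilde{z}_s\bigr)$, telescope, and obtain
\[
\sum_{t=1}^n \sum_{a \in \cA} q_t(a)\, a^{\top}\tilde{z}_t \;-\; \min_{a \in \cA} \sum_{t=1}^n a^{\top}\tilde{z}_t \;\le\; \frac{\log N}{\eta} + \eta \sum_{t=1}^n \sum_{a \in \cA} q_t(a)\, (a^{\top}\tilde{z}_t)^2.
\]
Plugging this into the exploitation term and using $(1-\gamma) q_t(a) \le p_t(a)$ to upgrade $q_t$ to $p_t$ in the quadratic term, together with $(1-\gamma) \le 1$ in front of $\log N/\eta$, yields the claim.

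I expect no real obstacle: the only nontrivial point is justifying the unbiasedness identity, which is where the mixing with $\mu$ enters (it guarantees that $\mathrm{range}(P_t) \supseteq \mathrm{span}(\cA)$ so that $P_t^+ P_t$ is the identity on all relevant $a$). Everything else is bookkeeping: carrying the factor $(1-\gamma)$ through and upgrading $q_t$ to $p_t$ in the variance term. Boundedness of $|a^\top z|$ is used exactly once, to cap the exploration cost at $2\gamma n$.
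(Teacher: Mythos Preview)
Your proposal is correct and follows precisely the standard exponential-weights argument that the paper defers to (it gives no proof of its own, citing Chapter~7 of \cite{Bub11}). The decomposition via $p_t = (1-\gamma) q_t + \gamma \mu$, the use of $e^{-x} \le 1 - x + x^2$ under $\eta|a^{\top}\tilde z_t|\le 1$, and the final upgrade $(1-\gamma)q_t \le p_t$ in the variance term are exactly the expected steps. One minor sharpening: your claim that $p_t$ has full support on $\cA$ does not rely on $\mu$ having full support (it need not, e.g.\ John's exploration is supported on $O(d^2)$ points); rather, $q_t$ already has full support because $q_1$ is uniform and exponential weights never vanish, and $\gamma<1$ preserves this in $p_t$.
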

%
%
Figure \ref{fig:OSMD} describes {\sc osmd} in the bandit setting. Note that step (c) can be written in several equivalent ways, such as a Follow The Regularized Leader equation, or a mirror gradient descent step if $F$ is a Legendre function. When written as a gradient descent step, one usually has to project back on $\cA$ (using the Bregman divergence associated to $F$).
Here the projection is implicit in the evaluation of $\nabla F^*$.
The following theorem states a general regret bound for {\sc osmd}. Recall that the Bregman divergence with respect to $F$ is defined as $D_F(x,y) = F(x) - F(y) - (x-y)^{\top}\nabla F(y)$, and the Legendre-Fenchel dual of $F$ is defined as $F^*(v) = \sup_{x \in \cA} x^{\top} v - F(x)$.
In the following, we write $x_1^t$ to denote $x_1 + \cdots + x_t$.
\begin{theorem} \label{th:osmd}
Let $\cA$ be a compact set of actions, and $F$ a function with effective domain $\cA$, and such that $F^*$ is differentiable on $\R^d$. Then {\sc osmd} satisfies (for any norm $\|\cdot \|$)
$$R_n  \leq  \frac{\sup_{a \in \cA} F(a) - F(a_1)}{\eta} + \frac{1}{\eta} \sum_{t=1}^n \E D_{F^*}\bigl(- \eta\tilde{z}_1^t, - \eta\tilde{z}_1^{t-1}\bigr)
 + \sum_{t=1}^n \E \big\|a_t - \E[\tilde{a}_t \mid a_t] \big\| \cdot \|z_t\|_* ~.$$
\end{theorem}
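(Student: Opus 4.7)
The plan is to decompose the regret as
\[
R_n \;=\; \underbrace{\E\sum_{t=1}^n (\tilde{a}_t - a_t)^{\top} z_t}_{\mathrm{(I)}} \;+\; \underbrace{\E\sum_{t=1}^n a_t^{\top} z_t \;-\; \min_{a\in\cA} \E\sum_{t=1}^n a^{\top} z_t}_{\mathrm{(II)}},
\]
and bound the ``perturbation cost'' (I) by H\"older and the ``deterministic regret'' (II) by the standard Fenchel-dual potential argument for mirror descent. For (I), conditioning on $a_t$ gives $\E\bigl[(\tilde{a}_t - a_t)^{\top} z_t \bigm| a_t\bigr] = (\E[\tilde{a}_t\mid a_t] - a_t)^{\top} z_t \leq \|a_t - \E[\tilde{a}_t\mid a_t]\| \cdot \|z_t\|_{*}$, which is exactly the third term in the bound.

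For (II), introduce the dual potential $\Phi_t := F^{*}(-\eta\, \tilde{z}_1^t)$. The definition of the Bregman divergence gives
\[
\Phi_t - \Phi_{t-1} \;=\; -\eta\, \tilde{z}_t^{\top} \nabla F^{*}(-\eta\, \tilde{z}_1^{t-1}) \;+\; D_{F^{*}}\bigl(-\eta\,\tilde{z}_1^t,\, -\eta\,\tilde{z}_1^{t-1}\bigr),
\]
and identifying $\nabla F^{*}(-\eta\, \tilde{z}_1^{t-1})$ with $a_t$ (via step (c) of the algorithm) produces a telescoping sum. Combining $\Phi_0 = F^{*}(0) = -\min_{a \in \cA}F(a) = -F(a_1)$ with the Fenchel--Young inequality $\Phi_n \geq -\eta\, a^{\top}\tilde{z}_1^n - F(a)$ for any $a \in \cA$ then yields, for every such $a$,
\[
\sum_{t=1}^n (a_t - a)^{\top} \tilde{z}_t \;\leq\; \frac{F(a) - F(a_1)}{\eta} \;+\; \frac{1}{\eta}\sum_{t=1}^n D_{F^{*}}\bigl(-\eta\,\tilde{z}_1^t,\, -\eta\,\tilde{z}_1^{t-1}\bigr).
\]

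The final step is to trade $\tilde{z}_t$ for $z_t$. Conditional on the history and on $\tilde{a}_t \sim p_t$, we have $\E[\tilde{z}_t] = P_t^{+} P_t\, z_t$, and since $P_t^{+} P_t$ is the orthogonal projection onto the range of $P_t$---that is, onto the linear span of the support of $p_t$---one has $\E[(a_t - a)^{\top} \tilde{z}_t] = \E[(a_t - a)^{\top} z_t]$ whenever both $a_t$ and $a$ lie in that span. Taking expectations, then taking the maximum over $a \in \cA$ on both sides (the right-hand side depends on $a$ only through $F(a)$), and adding back (I) gives the theorem. The main subtlety I expect is this pseudoinverse step: one must ensure that the perturbation in step (a) produces a $p_t$ whose support spans $\cA$, a condition the concrete schemes of Sections~\ref{sec:hypercube} and~\ref{sec:ball} will be designed to meet.
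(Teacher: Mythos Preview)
Your proof is correct and follows essentially the same route as the paper: Fenchel--Young (which the paper calls Young's inequality) combined with a telescoping of $F^*(-\eta\tilde z_1^t)$ via the Bregman identity, plus the identification $a_t=\nabla F^*(-\eta\tilde z_1^{t-1})$, yields the pathwise bound $\sum_t (a_t-a)^\top\tilde z_t \le \frac{F(a)-F(a_1)}{\eta}+\frac1\eta\sum_t D_{F^*}(\cdot,\cdot)$, after which one passes from $\tilde z_t$ to $z_t$ in expectation and adds the perturbation cost. The paper simply writes ``taking into account the randomness induced by $\tilde a_t$ and $\tilde z_t$ is then an easy exercise'' for the last two steps; your discussion of $P_t^+P_t$ as the projector onto the span of the support of $p_t$, and of the H\"older bound for the bias term, fills in exactly what the paper omits.
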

\begin{proof}
The proof is adapted from \cite{KST10}. Using Young's inequality, one obtains $\forall a \in \cA$
\begin{align*}
    - \eta \sum_{t=1}^n a^{\top} \tilde{z}_t
& \leq
    F(a) + F^*\left( - \eta \tilde{z}_1^n \right)
\\&=
    F(a) + F^*(0) + \sum_{t=1}^n \Bigl( F^*\left( - \eta \tilde{z}_1^t \right)  - F^*\left( - \eta \tilde{z}_1^{t-1} \right) \Bigr)
\\ &=
    F(a) + F^*(0) + \sum_{t=1}^n \left( \nabla F^*\left( - \eta \tilde{z}_1^{t-1} \right)^{\top} (- \eta \tilde{z}_t) + D_{F^*}\bigl(- \eta\tilde{z}_1^t, - \eta \tilde{z}_1^{t-1}\bigr) \right)
\\ &=
    F(a) + F^*(0) + \sum_{t=1}^n \left( - \eta a_t^{\top} \tilde{z}_t + D_{F^*}\bigl(- \eta \tilde{z}_1^t, - \eta \tilde{z}_1^{t-1} \bigr) \right)
\end{align*}
since $F^*(0)= - F(a_1)$. This shows that: 
$$\sum_{t=1}^n (a_t -a)^{\top} \tilde{z}_t \leq \frac{F(a) - F(a_1)}{\eta} + \frac{1}{\eta} \sum_{t=1}^n D_{F^*}\bigl(- \eta \tilde{z}_1^t, - \eta \tilde{z}_1^{t-1}\bigr) .$$
Taking into account the randomness induced by $\tilde{a}_t$ and $\tilde{z}_t$ is then an easy exercise, see for example \citep[Chapter 7]{Bub11}.
\end{proof}
This theorem proves to be particularly useful when applied with a Legendre function $F$ ---see \cite[Chapter 11]{CL06} for the definition of a Legendre function. Indeed, in that case $F^*$ is differentiable if $F$ is differentiable, and moreover the corresponding gradient mappings are inverse of each other, which gives a simple way to do computations with the Bregman divergence $D_{F^*}$.

%

\section{{\sc exp2} with John's exploration} \label{sec:john}
We propose here a new exploration distribution $\mu$ for the {\sc exp2} strategy, that allows us to derive the first $\sqrt{d n \log N}$ regret bound for online linear optimization with bandit feedback. We use the following result from convex geometry, see \citep{Bal97} for a proof.
\begin{theorem} \label{th:john}
Let $\cK \subset \R^d$ be a convex set. If the ellipsoid $\cE$ of minimal volume enclosing $\cK$ is the unit ball in some norm derived from a scalar product $\langle \cdot, \cdot \rangle$, then there exists $M \leq d (d+1)/2 + 1$ contact points $u_1, \hdots, u_M$ between $\cE$ and $\cK$, and $\mu \in \Delta_M$ (the simplex of dimension $M-1$), such that
$$x = d \sum_{i=1}^M \mu_i \langle x, u_i \rangle u_i, \forall x \in \R^d .$$
\end{theorem}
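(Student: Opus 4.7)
The plan is to reduce the pointwise identity
$$x = d \sum_{i=1}^M \mu_i \langle x, u_i \rangle u_i \qquad \forall x \in \R^d$$
to a matrix identity $I = d \sum_i \mu_i u_i u_i^{\top}$ in the space $\mathbb{S}^d$ of real symmetric $d \times d$ matrices, and then to prove this by a standard convex-geometric argument. Applying the pointwise identity to the basis vectors makes the two formulations equivalent, so it suffices to show that $\tfrac{1}{d} I$ lies in the convex hull $\cC$ of the set $\{u u^{\top} : u \in T\}$, where $T \subset \cK$ is the (compact) set of contact points between $\cK$ and the unit sphere. Once this membership is established, Carath\'eodory's theorem applied in $\mathbb{S}^d$ gives the bound $M \leq \dim \mathbb{S}^d + 1 = d(d+1)/2 + 1$ immediately.

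The heart of the argument is the convex-hull membership. I would argue by contradiction: assume $\tfrac{1}{d} I \notin \cC$. Since $T$ is compact, $\cC$ is a closed convex subset of $\mathbb{S}^d$, and the Hahn--Banach separation theorem applied with the trace inner product $\langle A, B\rangle = \mathrm{tr}(AB)$ produces some $H \in \mathbb{S}^d$ and a scalar $\alpha$ with
$$\tfrac{1}{d}\,\mathrm{tr}(H) < \alpha \leq u^{\top} H u \quad \text{for all } u \in T.$$
Choosing any $\alpha'$ with $\mathrm{tr}(H)/d < \alpha' < \alpha$ and replacing $H$ by $H - \alpha' I$ reduces to the case $\mathrm{tr}(H) < 0$ and $u^{\top} H u \geq c > 0$ for all $u \in T$, where $c := \alpha - \alpha'$.

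The final step is to use this $H$ to build a strictly smaller ellipsoid still containing $\cK$, contradicting minimality of the unit ball. Consider $A_\epsilon := I - \epsilon H$. For small $\epsilon > 0$, $A_\epsilon \succ 0$ and
$$\det(A_\epsilon) = 1 - \epsilon\,\mathrm{tr}(H) + O(\epsilon^2) > 1,$$
so the ellipsoid $E_\epsilon = \{x : x^{\top} A_\epsilon x \leq 1\}$ has strictly smaller volume than the unit ball. The containment $\cK \subset E_\epsilon$ for small $\epsilon$ should follow by splitting $\cK$ into a neighborhood $U$ of $T$ where $x^{\top} H x \geq c/2$ (by continuity), giving $x^{\top} A_\epsilon x \leq \|x\|^2 - \epsilon c/2 \leq 1$, and its complement in $\cK$, which is compact and disjoint from $T$ and thus contained in $\{\|x\|^2 \leq 1-\delta\}$ for some $\delta > 0$, giving $x^{\top} A_\epsilon x \leq 1 - \delta + \epsilon \|H\|_{\mathrm{op}} \leq 1$ for $\epsilon$ small enough.

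The main obstacle I expect will be precisely this containment step: one must rule out the possibility that points of $\cK$ approach $T$ along directions where $x^{\top} H x$ degenerates to zero and fails to compensate for the shrinking $1-\|x\|^2$ slack. Upgrading the separation inequality to the strict lower bound $u^{\top} H u \geq c > 0$ on $T$, which uses both strictness of the separation and compactness of $T$, is exactly what avoids this difficulty.
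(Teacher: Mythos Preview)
The paper does not supply its own proof of this theorem; it merely quotes the result and cites Ball's survey for a proof. Your proposal is correct and is essentially the classical argument found there: reformulate the identity as $\tfrac{1}{d}I \in \mathrm{conv}\{u u^{\top} : u \in T\}$ in the space of symmetric matrices, argue by separation if this fails, and use the separating quadratic form $H$ to perturb the unit ball into a strictly smaller enclosing ellipsoid, contradicting minimality. The containment step you flag as the obstacle is handled exactly as you describe, and the compactness/strict-separation upgrade to $u^{\top} H u \geq c > 0$ on $T$ is indeed the point that makes it work. As a minor remark, since every $u u^{\top}$ with $u \in T$ has trace $1$, these matrices already lie in an affine hyperplane of the space of symmetric matrices, so Carath\'eodory in fact yields $M \leq d(d+1)/2$; the statement records only the weaker bound.
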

To use this theorem, we need to perform a preprocessing of the action set as follows:
\begin{enumerate}
\item First, we assume that $\cA$ is of full rank (that is such that linear combinations of $\cA$ span $\R^d$). If it is not the case, then one can rewrite the elements of $\cA$ in some lower dimensional vector space and work there.
\item Find John's ellipsoid for $\conv(\cA)$ ---i.e., the ellipsoid of minimal volume enclosing $\conv(\cA)$: $\cE = \{x \in \R^d: (x-x_0)^{\top} H^{-1} (x-x_0) \leq 1 \}$. The first preprocessing step is to translate everything by $x_0$. In other words, we assume now that $\cA$ is such that $x_0 = 0$. Furthermore, we define the inner product $\langle x,y\rangle = x^{\top} H y$. 
\item We can now assume that we are playing on $\cA' = H^{-1} \cA$, and the loss of playing $a' \in \cA'$ when the adversary plays $z$ is $\langle a', z\rangle$. Indeed:  $\langle H^{-1} a, z\rangle = a^{\top} z$. Moreover, note that John's ellipsoid for $\conv(\cA')$ is the unit ball for the inner product $\langle \cdot , \cdot \rangle$ because $\langle H^{-1} x, H^{-1} x\rangle = x^{\top} H^{-1} x$.
\item Find the contact points $u_1, \hdots, u_M$ and $\mu \in \Delta_M$ that satisfy Theorem \ref{th:john} for $\conv(\cA')$. Note that the contact points are in $\cA'$, thus they are valid points to play. We say that $\mu$ is John's exploration distribution.
\end{enumerate}
In the following we drop the prime on $\cA'$. More precisely. we play on a set $\cA$ such that John's ellipsoid for $\conv(\cA)$ is the unit ball for some inner product $\langle \cdot , \cdot \rangle$, and the loss is given by $\langle a, z\rangle$. Thus, we also need to slightly change the algorithm to account for the fact that the loss is now an arbitrary scalar product. Step (c) in Figure \ref{fig:exp2} is modified as:
$$q_{t+1}(a) =\frac{\exp(- \eta \langle a, \tz_t \rangle) q_{t}(a)}{\sum_{b \in\cA} \exp(- \eta \langle b, \tz_t \rangle) q_{t}(b)}.$$ 
We also modify the loss estimate given by step (b) as follows. Recall that the outer product $u \otimes u$  
is defined as the linear mapping from $\R^d$ to $\R^d$ such that $u \otimes u (x) = \langle u, x \rangle u$. Note that one can also view $u \otimes u$ as a $d \times d$ matrix, so that the evaluation of $u \otimes u$ is equivalent to a multiplication by the corresponding matrix. Now let:
$$P_t = \sum_{a \in \cA} {p}_t(a) a \otimes a .$$
Note that this matrix is invertible, since $\cA$ is of full rank and ${p}_t(a) > 0$, $\forall a \in \cA$. The estimate for $z_t$ is given by:
\begin{equation} \label{eq:banditestimate}
\tilde{z}_t = P_t^{-1} \left({a}_t \otimes {a}_t\right) z_t.
\end{equation}
Note that this is a valid estimate since $\left({a}_t \otimes {a}_t\right) z_t = \langle {a}_t, z_t \rangle {a}_t$ and $P_t^{-1}$ are observed quantities. Moreover, it is also clearly an unbiased estimate. We can now prove the following result.
\begin{theorem} \label{th:exp2john}
{\sc exp2} with John's exploration and estimate \eqref{eq:banditestimate} satisfies, for $\frac{\eta d}{\gamma} \leq 1$,
$$R_n \leq 2 \gamma n + \frac{\log N}{\eta} + \eta n d.$$
In particular with $\gamma = \eta d$ and $\eta = \sqrt{\frac{\log N}{3 n d}}$ we have that
$$R_n \leq 2 \sqrt{3 n d \log N} .$$
\end{theorem}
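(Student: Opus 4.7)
The plan is to apply Theorem~\ref{th:exp2} and control the two data-dependent terms using the structural identity from John's Theorem. Throughout, I work with the preprocessed action set so that John's ellipsoid for $\mathrm{Conv}(\cA)$ is the unit ball of the inner product $\langle\cdot,\cdot\rangle$, and I write $\|\cdot\|$ for its induced norm. In particular $\|a\|\le 1$ for every $a\in\cA$, and the dual-set assumption gives $|\langle a,z\rangle|\le 1$.

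The key structural observation is that John's identity $x = d\sum_i \mu_i \langle x,u_i\rangle u_i$ says precisely that, as a self-adjoint operator with respect to $\langle\cdot,\cdot\rangle$,
\[
\sum_{i=1}^M \mu_i\, u_i\otimes u_i \;=\; \tfrac{1}{d}\, I .
\]
Since $p_t = (1-\gamma)q_t + \gamma\mu$ and all $a\otimes a$ are positive semidefinite, this yields $P_t \succeq \frac{\gamma}{d} I$, and therefore $P_t^{-1}\preceq \frac{d}{\gamma} I$ in the operator sense.

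Next I would verify the range hypothesis of Theorem~\ref{th:exp2}. By self-adjointness of $P_t^{-1}$ and Cauchy--Schwarz,
\[
|\langle a,\tilde z_t\rangle|
= |\langle a, P_t^{-1} a_t\rangle|\cdot|\langle a_t,z_t\rangle|
\le \|a\|\cdot\|P_t^{-1} a_t\|\cdot 1
\le \tfrac{d}{\gamma},
\]
so under $\eta d/\gamma\le 1$ the precondition $\eta|\langle a,\tilde z_t\rangle|\le 1$ holds. For the variance term, using self-adjointness again,
\[
\E_{a_t\sim p_t}\bigl[\langle a,\tilde z_t\rangle^2\bigr]
\le \sum_{a'\in\cA} p_t(a')\,\langle P_t^{-1}a, a'\rangle^2
= \Bigl\langle P_t^{-1}a,\; \Bigl(\sum_{a'} p_t(a')\, a'\otimes a'\Bigr) P_t^{-1}a\Bigr\rangle
= \langle a, P_t^{-1} a\rangle ,
\]
where the first bound uses $|\langle a_t,z_t\rangle|\le 1$ and the last equality uses the definition of $P_t$. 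Summing against $p_t$ and using $a\otimes a$ as a trace,
\[
\sum_{a\in\cA} p_t(a)\,\langle a, P_t^{-1} a\rangle
= \mathrm{tr}\!\Bigl(P_t^{-1}\sum_{a} p_t(a)\, a\otimes a\Bigr)
= \mathrm{tr}(I) = d .
\]

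Plugging these into Theorem~\ref{th:exp2} gives $R_n\le 2\gamma n + \frac{\log N}{\eta} + \eta n d$. Setting $\gamma=\eta d$ (the tightest choice consistent with the range condition) yields $R_n\le 3\eta n d + \frac{\log N}{\eta}$, and optimizing over $\eta$ gives the stated $2\sqrt{3nd\log N}$ bound. The only genuine obstacle is the variance calculation: one must recognize that the natural estimator built from $P_t^{-1}$ collapses via the outer-product/trace identity to exactly $\mathrm{tr}(I)=d$, independent of $p_t$; everything else is direct application of John's identity and Theorem~\ref{th:exp2}.
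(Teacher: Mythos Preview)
Your proof is correct and follows essentially the same approach as the paper: both verify the range condition via $P_t\succeq\frac{\gamma}{d}I$ from John's identity, and both reduce the variance term to $\mathrm{tr}(P_t^{-1}P_t)=d$. The only cosmetic difference is the order of operations in the variance calculation---the paper first sums over $a$ to obtain $\langle\tilde z_t,P_t\tilde z_t\rangle\le\langle P_t^{-1}a_t,a_t\rangle$ and then takes $\E_{a_t}$ (via a spectral decomposition), whereas you first take $\E_{a_t}$ for each $a$ and then sum (via the trace identity); these are equivalent by linearity.
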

\begin{proof}
With the chosen scalar product, it is easy to see that the condition $\eta |a^{\top} \tilde{z}_t| \leq 1$ in Theorem \ref{th:exp2} rewrites as $\eta | \langle a, \tilde{z}_t \rangle | \leq 1$, while the third term in the regret bound rewrites as $\E \sum_{a \in \cA} {p}_t(a) \langle a, \tilde{z}_t \rangle^2$. Thus it remains to control those two quantities. Let us start with the latter:
\begin{align*}
    \sum_{a \in \cA} \tilde{p}_t(a) \langle a, \tilde{g}_t \rangle^2
&=
    \sum_{a \in \cA} {p}_t(a) \langle \tilde{z}_t, (a \otimes a) \tilde{z}_t \rangle 
\\ &=
    \langle \tilde{z}_t, P_t \tilde{z}_t \rangle
=
    \langle {a}_t, z_t \rangle^2 \langle P_t^{-1} {a}_t, P_t P_t^{-1} {a}_t\rangle
\leq
    \langle P_t^{-1} {a}_t, {a}_t \rangle .
\end{align*}
Now we use a spectral decomposition of $P_t$ in an orthonormal basis for $\langle \cdot, \cdot \rangle$ and write
$P_t = \sum_{i=1}^d \lambda_i v_i \otimes v_i .$
In particular, we have $P_t^{-1} = \sum_{i=1}^d \frac{1}{\lambda_i} v_i \otimes v_i$ and thus:
$$\E \langle P_t^{-1} {a}_t, {a}_t \rangle  =  \sum_{i=1}^d \frac{1}{\lambda_i} \E \langle (v_i \otimes v_i) {a}_t, {a}_t \rangle
 = \sum_{i=1}^d \frac{1}{\lambda_i} \E \langle ({a}_t \otimes {a}_t) v_i, v_i \rangle
 =  \sum_{i=1}^d \frac{1}{\lambda_i} \langle P_t v_i, v_i \rangle  =  d. $$
This concludes the bound for $\E \sum_{a \in \cA} {p}_t(a) \langle a, \tilde{z}_t \rangle^2$. We turn now to $\langle a, \tilde{z}_t \rangle$:
$$\langle a, \tilde{z}_t \rangle = \langle {a}_t, z_t \rangle \langle a, P_t^{-1} {a}_t \rangle
\leq \langle a, P_t^{-1} {a}_t \rangle
\leq \frac{1}{\min_{1 \leq i \leq d} \lambda_i}$$
where the last inequality follows from the fact that $\langle a, a \rangle \leq 1$ for any $a \in \cA$, since $\cA$ is included in the unit ball. Now to conclude the proof we need to lower bound the smallest eigenvalue of $P_t$. Using Theorem \ref{th:john}, one can see that $P_t \succeq \frac{\gamma}{d} I_d$, and thus $\lambda_i \geq \frac{\gamma}{d}$ concluding the proof.
\end{proof}
Using the discretization argument of \cite{DHK08}, {\sc exp2} with John's exploration can be used to obtain a regret of order $\sqrt{d n \log n}$ for any compact set of action $\cA$.

\subsection{Computational issues}
If $\cA$ is given by a finite set of points, then ~\cite{GroetschelLovaszSchrijver1993} give a polynomial time algorithm for computing a constant factor approximation to the John's ellipsoid (and this approximate basis will provide the same order of regret). However, if $\cA$ is specified by the intersection of half spaces, then  \cite{Nemi07} shows that obtaining such a constant factor approximation to this ellipsoid is NP-hard in general. Here, it is possible to efficiently compute an ellipsoid where the factor of $d$ in Theorem~\ref{th:john} is replaced by $d^{3/2}$ ---see~\citep{GroetschelLovaszSchrijver1993}, which leads to a slightly worse dependence on $d$ in the regret bound.

In special cases, we conjecture that the John's ellipsoid may be computed efficiently, as for certain problems, there are efficient implementations of GeometricHedge that lead to optimal rates (such as shortest path problems and other settings where dynamic programming solutions exists).

\subsection{Application to bandits with experts}
Consider the following model of linear bandits with $N$ experts. At each time step $t =1, 2, \hdots, n$, each expert $k = 1, \hdots, N$ suggests an action $a_t(k) \in \R^d$. The goal here is to compete with the best expert, that is at each time step the strategy chooses an expert $k_t \in \{1, \hdots, N\}$ and the regret is given by:
$$R_n^{\mathrm{exp}} = \E \sum_{t=1}^n a_t(k_t)^{\top} z_t - \min_{k \in \{1, \hdots, N\}} \E \sum_{t=1}^n a_t(k)^{\top} z_t . $$
One can use {\sc exp2} with John's exploration to obtain a regret of order $\sqrt{d n \log N}$ for this problem. Indeed, it suffices at every turn to do the preprocessing step on $\cA_t = \{a_t(1), \hdots, a_t(N)\}$ and to build the corresponding John's exploration $\mu_t$, the straightforward details are omitted.

For example, at each time $t$ each expert $i=1,\dots,N$ is associated with a hidden loss estimate $z_t(i)\in\cZ$ and an arbitrary ``context set'' $\cA_t\subseteq\cA$ is observed. Each expert $i$ then suggests the best action according to the current loss estimate,
$
	a_t(i) = \argmin_{a\in\cA_t} z_t(i)^{\top}a~.
$
This can be viewed as a natural nonstochastic variant of the contextual linear bandit model of~\cite{CLRS11}.
Another notable special case is the $d$-armed bandit problem with expert advice, where we can view the suggested actions as the corners of the $d$-dimensional simplex. Here, the EXP4 algorithm of~\cite{ACFS03} achieves a regret of order $\sqrt{d n \ln N}$. Interestingly, the regret achievable
in the more general $d$-dimensional linear optimization setting is no worse than in the seemingly simpler $d$-armed bandit with expert advice setting.

\section{Computationally efficient strategy for the hypercube} \label{sec:hypercube}
In this section we restrict our attention to the action set $\cA = \{x \in \R^d: \|x\|_{\infty} \leq 1 \}$. Using {\sc exp2} with John's exploration on $\{-1,1\}^d$ one obtains a regret bound of order $d \sqrt{n}$ for this problem, and as it was shown by \cite{DHK08} this regret is minimax optimal. However, it is not known if it is possible to sample from the exponential weights distribution in polynomial time for this particular set of actions. In this section we propose to turn to {\sc osmd}, and we show that with the appropriate regularizer $F$ and random perturbation $\tilde{a}_t$ (see step (a) in Figure \ref{fig:OSMD}), one can obtain a minimax optimal algorithm with computational complexity linear in $d$. 
More precisely we use an entropic regularizer
\begin{equation}
\label{eq:entro-reg}
    F(x) = \frac12 \sum_{i=1}^d \bigl( (1+x_i) \log (1+x_i) + (1-x_i) \log (1-x_i) \bigr)
\end{equation}
together with the following perturbation of a point $a_t$ in the interior of $\cA$:
\begin{quote}
With probability $\gamma$, play $\tilde{a}_t$ uniformly at random from the canonical basis (with random sign).
With probability $1-\gamma$, play $\tilde{a}_t = \xi_t$ where $\xi_t(i)$ is drawn from a Rademacher with parameter $\frac{1+a_t(i)}{2}$.
\end{quote}
It is easy to check that this perturbation is almost unbiased, indeed one has:
$$\E\,\tilde{a}_t(i) = (1-\gamma) \left(\frac{1+a_t(i)}{2} - \frac{1 - a_t(i)}{2} \right) =  (1-\gamma) a_t(i), $$
and thus:
\begin{equation} \label{eq:hypercubeunbiased1}
\big\|\E[\tilde{a}_t \mid a_t] - a_t\big\|_{\infty} \leq \gamma.
\end{equation}
We can now prove the following result.
\begin{theorem} \label{th:hypercube}
Consider the online linear optimization problem with bandit feedback on $\cA = \{x \in \R^d: \|x\|_{\infty} \leq 1 \}$, and with $\cZ = \{x \in \R^d: \|x\|_{1} \leq 1 \}$. Then {\sc osmd} on $\cA$ with regularizer~(\ref{eq:entro-reg}) satisfies, for any $\eta$ and $\gamma \in (0,1)$ such that $\frac{\eta d}{\gamma} \leq \frac12$,
\begin{equation} \label{eq:hypercuberegret1}
R_n \leq \gamma n + \frac{d \log 2}{\eta} + \eta \sum_{t=1}^n \sum_{i=1}^d \E \Bigl[ \bigl(1 - a_t(i)^2\bigr) \tilde{z}_t(i)^2 \Bigr] .
\end{equation}
In particular, with $\gamma = 2 d \sqrt{\frac{\log 2}{3 n}}$ and $\eta= \sqrt{\frac{\log 2}{3 n}}$,
\begin{equation} \label{eq:hypercuberegret2}
R_n \leq 2 d \sqrt{3 n \log 2} .
\end{equation}
\end{theorem}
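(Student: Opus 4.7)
My plan is to apply Theorem~\ref{th:osmd} with the separable entropic regularizer $F$ of \eqref{eq:entro-reg} and the norm pair $(\|\cdot\|_\infty,\|\cdot\|_1)$. Separability gives $F^*(v) = \sum_i \log\cosh(v_i)$ and $\nabla F^*(v)_i = \tanh(v_i)$, so the iterates satisfy $a_t(i) = \tanh(-\eta \tilde z_1^{t-1}(i))$. The leading term in Theorem~\ref{th:osmd} becomes $(\sup_{\cA} F - F(a_1))/\eta = (d\log 2)/\eta$, since $a_1 = \argmin F = 0$ gives $F(0) = 0$ and the maximum $d \log 2$ is attained at the corners of $\cA$. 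The bias term is at most $\gamma n$, obtained by combining \eqref{eq:hypercubeunbiased1} with $\|z_t\|_1 \leq 1$.

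The heart of the argument is a sharp coordinate-wise Bregman estimate which I would prove first:
\[
D_{\log\cosh}(w+\delta,w) \;\leq\; (1-\tanh^2 w)\,\delta^2 \qquad \text{whenever } |\delta| \leq \tfrac12.
\]
Setting $p = (1+\tanh w)/2$, $q = 1-p$ (so that $4pq = 1-\tanh^2 w$) rewrites the Bregman divergence as the Rademacher cumulant $\log(pe^\delta + qe^{-\delta}) - (p-q)\delta$. The identity $(\log\cosh)''(w+s) = 4pq/(pe^s+qe^{-s})^2 \leq 4pq\,e^{2|s|}$ combined with the elementary inequality $e^{2x} - 2x - 1 \leq 4x^2$ for $0 \leq x \leq \tfrac12$, inserted into the Taylor integral remainder, closes the proof. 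Summing coordinate-wise at $w_i = -\eta \tilde z_1^{t-1}(i)$ with increment $-\eta \tilde z_t(i)$ then yields $D_{F^*}\bigl(-\eta \tilde z_1^t,-\eta \tilde z_1^{t-1}\bigr) \leq \eta^2 \sum_i (1-a_t(i)^2)\,\tilde z_t(i)^2$, provided the pointwise sidecondition $\eta|\tilde z_t(i)| \leq \tfrac12$ holds. This sidecondition rests on $P_t \succeq (\gamma/d) I$ (forced by the uniform-basis part of the exploration), so that $|\tilde z_t(i)|$ is controlled by $d/\gamma$ times the unit-bounded quantity $|\tilde a_t^{\top} z_t|$, and the hypothesis $\eta d/\gamma \leq \tfrac12$ then suffices. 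Assembling the three pieces produces \eqref{eq:hypercuberegret1}.

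For the explicit bound \eqref{eq:hypercuberegret2}, I would bound the quadratic in expectation. Since $(\tilde a_t^{\top} z_t)^2 \leq 1$ and $\E[\tilde a_t \tilde a_t^{\top}] = P_t$, one obtains $\E[\tilde z_t(i)^2 \mid a_t] \leq (P_t^{-1})_{ii}$. Writing $P_t = D + (1-\gamma) a_t a_t^{\top}$ with the diagonal matrix $D = (\gamma/d) I + (1-\gamma)\,\mathrm{diag}(1-a_t(i)^2)$, the inclusion $P_t^{-1} \preceq D^{-1}$ gives $(1-a_t(i)^2)(P_t^{-1})_{ii} \leq (1-\gamma)^{-1}$, whence $\sum_i \E[(1-a_t(i)^2)\tilde z_t(i)^2] \leq d/(1-\gamma)$. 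Plugging $\eta = \sqrt{(\log 2)/(3n)}$ and $\gamma = 2d\sqrt{(\log 2)/(3n)}$ (which saturates $\eta d/\gamma = \tfrac12$) balances the three terms and produces \eqref{eq:hypercuberegret2}. The main obstacle is the scalar Bregman lemma with its sharp $(1-\tanh^2 w)$ factor: a naive $\mathrm{sech}^2 \leq 1$ bound yields $\sum_i \tilde z_t(i)^2$, whose expectation scales as $\mathrm{tr}(P_t^{-1}) = \Theta(d^2/\gamma)$ and loses a full factor of $d$ in the regret. Matching the local curvature of $\log\cosh$ at $w_i$ to the Rademacher variance $1 - a_t(i)^2$ of $\tilde a_t(i)$ is exactly what recovers the minimax $d\sqrt n$ rate.
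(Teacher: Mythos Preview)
Your proposal is correct and follows the same overall architecture as the paper: apply Theorem~\ref{th:osmd} with the $(\|\cdot\|_\infty,\|\cdot\|_1)$ norm pair, bound the bias term by~\eqref{eq:hypercubeunbiased1}, bound the range term by $d\log 2$, and reduce the Bregman term to a coordinatewise inequality for $\log\cosh$ under the constraint $\eta\|\tilde z_t\|_\infty\le\tfrac12$, which is secured via $P_t\succeq(\gamma/d)I$.

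The two places where your execution differs from the paper are both sound and worth noting. For the scalar Bregman estimate $D_{\log\cosh}(w+\delta,w)\le(1-\tanh^2 w)\,\delta^2$, the paper rewrites $\log\bigl(\cosh(w+\delta)/\cosh w\bigr)$ via the hyperbolic addition formula and then applies $e^x\le 1+x+x^2$ for $|x|\le 1$ together with $\log(1+x)\le x$; your Taylor-remainder argument with $(\log\cosh)''(w+s)=4pq/(pe^s+qe^{-s})^2\le 4pq\,e^{2|s|}$ and $e^{2x}-2x-1\le 4x^2$ on $[0,\tfrac12]$ reaches the same conclusion more mechanically. For the variance term, the paper bounds $(1-\gamma)\sum_i(1-a_t(i)^2)\tilde z_t(i)^2\le\tilde z_t^{\top}P_t\tilde z_t$ and then uses the trace identity $\E[\tilde a_t^{\top}P_t^{-1}\tilde a_t]=d$, while you exploit the explicit diagonal-plus-rank-one structure $P_t=D+(1-\gamma)a_ta_t^{\top}$ to get $P_t^{-1}\preceq D^{-1}$ and hence $(1-a_t(i)^2)(P_t^{-1})_{ii}\le(1-\gamma)^{-1}$. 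Both routes yield $\sum_i\E[(1-a_t(i)^2)\tilde z_t(i)^2]\le d/(1-\gamma)$; yours is perhaps more transparent about where the decoupling comes from, while the paper's trace argument is the one that generalizes (it is reused verbatim in the proof of Theorem~\ref{th:exp2john}).
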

Remark that the regularizer~(\ref{eq:entro-reg}) used here is in the class of Legendre functions with exchangeable Hessian. More precisely, following \cite{ABL11}, (\ref{eq:entro-reg}) can be written (up to a numerical constant) as
\[
    F(x) = \sum_{i=1}^d \int_{-1}^{x_i} \tanh^{-1}(s) ds~.
\]
This type of regularizer was first studied (implicitely) by \cite{AB09} and \cite{AB10}.
\begin{proof}
Since $F$ is Legendre on $\cA$, $F^*$ is differentiable on $\R^d$ and the gradient mapping of $F^*$ is the inverse of the gradient mapping of $F$. Therefore, $(\nabla F^*)_i = \tanh$ because $(\nabla F^*)_i = \tanh^{-1}$. Then, thanks to~\eqref{eq:hypercubeunbiased1} and Theorem \ref{th:osmd}, the regret can be bounded as:
$$\gamma n + \frac{\sup_{a \in \cA} F(a) - F(a_1)}{\eta} + \frac{1-\gamma}{\eta} \sum_{t=1}^n \E\,D_{F^*}\bigl( - \eta \tilde{z}_1^t,  - \eta \tilde{z}_1^{t-1}\bigr) .$$
For the first term it is easy to see that $F(a) - F(a_1) \leq d \log 2$. For the term involving the Bregman divergence, using elementary computations one obtains
$$D_{F^*}(u,v) = \sum_{i=1}^d \left( \log \frac{\cosh(u_i)}{\cosh(v_i)} - \tanh(v_i) (u_i - v_i) \right) .$$
To prove \eqref{eq:hypercuberegret1} we need to show that $D_{F^*}(u,v) \leq \sum_{i=1}^d \bigl(1 - \tanh^2(v_i)\bigr) (u_i-v_i)^2$. In fact, we prove that this inequality is true as soon as $\|u-v\|_{\infty} \leq \frac12$. The fact that the property is satisfied for the pair $(u,v) = \bigl( - \eta \tilde{z}_1^t,  - \eta \tilde{z}_1^{t-1}\bigr)$ under consideration is established at the very end of the proof.

Using a basic hyperbolic identity, and the elementary inequalities $\exp(x) \leq 1+x+x^2, \forall x: |x| \leq 1$ and $\log(1+x) \leq x$, one obtains
{\allowdisplaybreaks
\begin{align*}
& \log \left( \frac{\cosh(u_i)}{\cosh(v_i)} \right) - \tanh(v_i) (u_i - v_i)  \\
& =  \log \left( \frac{\cosh(v_i) \cosh(u_i - v_i) + \sinh(v_i) \sinh(u_i - v_i)}{\cosh(v_i)} \right) - \tanh(v_i) (u_i - v_i)\\
& =  \log \bigg( \cosh(u_i - v_i) + \tanh(v_i) \sinh(u_i - v_i) \bigg) - \tanh(v_i) (u_i - v_i) \\
& =  \log \bigg( \frac{1+\tanh(v_i)}{2} \exp(u_i - v_i) + \frac{1- \tanh(v_i)}{2} \exp(- (u_i - v_i)) \bigg)
\\ &\quad - \log \exp \bigg( \tanh(v_i) (u_i - v_i)\bigg) \\
& =  \log \bigg(\frac{1+\tanh(v_i)}{2} \exp\big( (1-\tanh(v_i)) (u_i-v_i) \big)
\\ &\quad + \frac{1-\tanh(v_i)}{2} \exp\big(- (1+\tanh(v_i)) (u_i-v_i) \big)  \bigg) \\
& \leq  \log \big( 1 + (1 - \tanh^2(v_i)) (u_i-v_i)^2 \big)
\leq   (1 - \tanh^2(v_i)) (u_i-v_i)^2
\end{align*}
}
which concludes the proof of \eqref{eq:hypercuberegret1}. Now for the proof of \eqref{eq:hypercuberegret2} we first compute the matrix $P_t$:
\begin{eqnarray*}
\E\,\tilde{a}_t \tilde{a}_t^{\top} & = & \frac{\gamma}{d} I_d + (1-\gamma) \sum_{i,j =1}^d \E\,\xi_t(i) \xi_t(j)\, e_i e_j^{\top} \\
& = &  \frac{\gamma}{d} I_d + (1-\gamma) I_d + (1-\gamma) \sum_{i \neq j} \E\,\xi_t(i) \xi_t(j)\,e_i e_j^{\top} \\
& = &  \frac{\gamma}{d} I_d + (1-\gamma) I_d + (1- \gamma) \sum_{i \neq j} a_t(i) a_t(j)\,e_i e_j^{\top} \\
& = &  \frac{\gamma}{d} I_d + (1-\gamma) a_t a_t^{\top} + (1-\gamma) \sum_{i=1}^d \bigl(1 - a_t(i)^2 \bigr) e_i e_i^{\top}. \\
\end{eqnarray*}
To obtain \eqref{eq:hypercuberegret2} first note that
$(1- \gamma) \sum_{i=1}^d \E \bigl[ (1 - a_t(i)^2) \tilde{z}_t(i)^2 \bigr] \leq \E \ \tilde{z}_t^{\top} P_t \tilde{z}_t.$
Now we use a spectral decomposition of $P_t$ in an orthonormal basis and write:
$P_t = \sum_{i=1}^d \lambda_i v_i v_i^{\top} .$
In particular we have $P_t^{-1} = \sum_{i=1}^d \frac{1}{\lambda_i} v_i v_i^{\top}$ and thus:
$$\E\,\tilde{a}_t^{\top} P_t^{-1} \tilde{a}_t  =  \sum_{i=1}^d \frac{1}{\lambda_i} \E\,\tilde{a}_t^{\top} v_i v_i^{\top} \tilde{a}_t
=  \sum_{i=1}^d \frac{1}{\lambda_i} v_i^{\top} P_t v_i
=  \sum_{i=1}^d \frac{1}{\lambda_i} \lambda_i v_i^{\top} v_i
=  d.$$
To conclude the proof it remains now to show that 
$\eta ||\tilde{z}_t||_{\infty} \leq \frac12.$
First note that the smallest eigenvalue of $P_t$ is larger than $\gamma/d$, and thus:
$$\eta |\tilde{z}_t(i)| = \eta \big| e_i^{\top} P_t^{-1} \tilde{a}_t \tilde{a}_t^{\top} z_t\big| 
 \leq  \eta \big| e_i^{\top} P_t^{-1} \tilde{a}_t \big|
 \leq  \frac{\eta d}{\gamma} 
 \leq  \frac12$$
where the penultimate inequality follows from $|e_i^{\top} \tilde{a}_t| \leq 1$ and the last inequality follows from the assumption on $\eta$ and $\gamma$.
\end{proof}

\section{Improved regret for the Euclidean ball} \label{sec:ball}
In this section we restrict our attention to the action set $\cA = \{x \in \R^d: \|x\| \leq 1 \}$, where $\|\cdot\|$ denotes the Euclidean norm. Using {\sc exp2} with John's exploration on a discretization of the Euclidean ball one obtains a regret bound of order $d \sqrt{n \log n}$ for this problem. A similar regret bound can be obtained with a computationally efficient algorithm, using the technique developed by \cite{AHR08}. Here we show that in fact one can attain efficiently a regret of order $\sqrt{d n \log n}$ using {\sc osmd} with the approriate regularizer $F$ and random perturbation $\tilde{a}_t$. More precisely here we use $F(x) = - \log (1 - \|x\|) - \|x\|$ (the motivation for this particular regularizer comes from the proof, see below). Moreover we perform the following perturbation of a point $a_t$ in the interior of $\cA$:
\begin{quote}
Let $\xi_t$ be a Bernoulli of parameter $\|a_t\|$, let $I_t$ be drawn uniformly at random in $\{1, \hdots, d\}$, and let $\epsilon_t$ be Rademacher with parameter $\frac12$. If $\xi_t = 1$, then play $\tilde{a}_t = a_t / \|a_t\|$, else play $\tilde{a}_t = \epsilon_t e_{I_t}$.
\end{quote}
It is easy to check that this perturbation is unbiased, in the sense that $\E\bigl[\tilde{a}_t \mid a_t\bigr] = a_t$.
Here we modify the estimate of step (b) in Figure \ref{fig:OSMD}, and instead we use:
\begin{equation} \label{eq:ballestimate}
\tilde{z}_t = (1 - \xi_t) \frac{d}{1- \|a_t\|} (z_t^{\top} \tilde{a}_t) \tilde{a}_t.
\end{equation}
It is easy to check that this estimator satisfies the same key unbiasedness property than the one in step (b) in Figure \ref{fig:OSMD}, that is $\E\bigl[\tilde{z}_t \mid a_t\bigr] = z_t$.
%
\newline

Note that the problem studied in this section was also specifically considered in \cite{AR09}, with an emphasis on high probability bounds. In this paper the authors used the self-concordant barrier $F(x) = - \log (1 - \|x\|^2)$ with a similar perturbation scheme to the one proposed above. They obtain suboptimal rates, but a more careful analysis (precisely slightly modifying Section V.B., step (E)) can actually yield the same rate than the one we obtain. The strength of our approach is that it is in a sense more elementary (e.g., we do not require any results from the Interior Point Methods literature), but on the other hand the result of \cite{AR09} holds with high probability (though it is not clear if it possible to get the rate $\sqrt{d n \log n}$ with high probability).
\begin{theorem} \label{th:ball}
Consider the online linear optimization problem with bandit feedback on $\cA = \{x \in \R^d: \|x\| \leq 1 \}$, and with $\cZ = \{x \in \R^d: \|x\| \leq 1 \}$. Then {\sc osmd} on $\cA' = \{x \in \R^d: \|x\| \leq 1-\gamma \}$ with the estimate \eqref{eq:ballestimate}, and $F(x) = - \log (1 - \|x\|) - \|x\|$ satisfies, for any $\eta$ such that $\eta d \leq \frac12$,
\begin{equation} \label{eq:ballregret1}
R_n \leq \gamma n + \frac{\log \gamma^{-1}}{\eta} + \eta \sum_{t=1}^n \E\Bigl[\bigl(1 - \|a_t\|\bigr) \|\tilde{z}_t\|^2\Bigr] .
\end{equation}
In particular, with $\gamma = \frac{1}{\sqrt{n}}$ and $\eta= \sqrt{\frac{\log n}{2 n d}}$,
\begin{equation} \label{eq:ballregret2}
R_n \leq  3 \sqrt{d n \log n} .
\end{equation}
\end{theorem}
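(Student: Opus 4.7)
The plan is to apply Theorem~\ref{th:osmd} to the restricted domain $\cA'=\{x\in\R^d:\|x\|\leq 1-\gamma\}$, relate its guarantee back to regret against $\cA$, and then specialize each of the three terms to the Legendre regularizer $F(x)=-\log(1-\|x\|)-\|x\|$ together with the proposed perturbation. The $\gamma n$ contribution in \eqref{eq:ballregret1} comes for free: any minimizer $a^\star\in\cA$ can be replaced by $(1-\gamma)a^\star\in\cA'$ at a cost per round of at most $\gamma\|a^\star\|\cdot\|z_t\|\leq\gamma$. The bias term $\|a_t-\E[\tilde a_t\mid a_t]\|\cdot\|z_t\|_*$ in Theorem~\ref{th:osmd} vanishes because the perturbation is unbiased ($\xi_t=1$ with probability $\|a_t\|$ places $\tilde a_t$ at $a_t/\|a_t\|$, while $\xi_t=0$ averages to $0$ over $\epsilon_t,I_t$, so $\E[\tilde a_t\mid a_t]=a_t$). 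And $\sup_{a\in\cA'}F(a)-F(a_1)\leq -\log\gamma-(1-\gamma)\leq\log\gamma^{-1}$ since $a_1=0$ minimizes $F$.

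The one real piece of work is bounding the Bregman divergence of $F^*$. Routine computation gives $F^*(v)=\|v\|-\log(1+\|v\|)$, hence
\[
    \nabla F^*(v)=\frac{v}{1+\|v\|},\qquad \nabla^2 F^*(v)=\frac{1}{1+\|v\|}I-\frac{vv^\top}{\|v\|(1+\|v\|)^2},
\]
and in particular $\|\nabla^2 F^*(v)\|_{\mathrm{op}}\leq 1/(1+\|v\|)$. Writing $v_t=-\eta\tilde z_1^{t-1}$, we have $a_t=\nabla F^*(v_t)$, so $\|a_t\|=\|v_t\|/(1+\|v_t\|)$ and $1-\|a_t\|=1/(1+\|v_t\|)$. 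Applying the integral form of Taylor's theorem to $F^*$ along the segment from $v_t$ to $v_t-\eta\tilde z_t$ yields
\[
    D_{F^*}(-\eta\tilde z_1^t,-\eta\tilde z_1^{t-1})\leq\tfrac12\,\eta^2\|\tilde z_t\|^2\,\sup_{w\in[v_t,v_t-\eta\tilde z_t]}\frac{1}{1+\|w\|}.
\]

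The main obstacle is verifying that this supremum does not blow up when $\|v_t\|$ is large (i.e.\ when $a_t$ is close to the boundary). From \eqref{eq:ballestimate} one has the pointwise bound $\|\tilde z_t\|\leq d/(1-\|a_t\|)=d(1+\|v_t\|)$, so the assumption $\eta d\leq \tfrac12$ gives $\eta\|\tilde z_t\|\leq\tfrac12(1+\|v_t\|)$, and hence $1+\|w\|\geq\tfrac12(1+\|v_t\|)=\tfrac12(1-\|a_t\|)^{-1}$ for every $w$ on that segment. This yields
\[
    D_{F^*}(-\eta\tilde z_1^t,-\eta\tilde z_1^{t-1})\leq \eta^2(1-\|a_t\|)\|\tilde z_t\|^2,
\]
and summing, dividing by $\eta$, and adding the three contributions identified above proves~\eqref{eq:ballregret1}.

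For~\eqref{eq:ballregret2} it remains to bound $\E[(1-\|a_t\|)\|\tilde z_t\|^2]$. Conditioning on $a_t$, the factor $(1-\xi_t)$ contributes probability $1-\|a_t\|$; given $\xi_t=0$, $\tilde a_t=\epsilon_t e_{I_t}$ so $\E[(z_t^\top\tilde a_t)^2\mid a_t,\xi_t=0]=\|z_t\|^2/d\leq 1/d$. Multiplying through gives $\E[(1-\|a_t\|)\|\tilde z_t\|^2\mid a_t]\leq d$, so the sum in \eqref{eq:ballregret1} is at most $\eta n d$. Plugging in $\gamma=1/\sqrt{n}$ and $\eta=\sqrt{\log n/(2nd)}$ (which satisfies $\eta d\leq\tfrac12$ for $n$ large enough in terms of $d$) balances the three terms $\gamma n$, $\log\gamma^{-1}/\eta$, and $\eta n d$ at order $\sqrt{dn\log n}$, and a straightforward constant check delivers the claimed $3\sqrt{dn\log n}$.
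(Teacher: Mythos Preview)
Your proof is correct and follows the same overall architecture as the paper: restrict to $\cA'$ at cost $\gamma n$, invoke Theorem~\ref{th:osmd}, note the perturbation is unbiased so the bias term vanishes, bound $\sup_{\cA'}F-F(0)\leq\log\gamma^{-1}$, and finish with the same computation $\E[(1-\|a_t\|)\|\tilde z_t\|^2]\leq d$.

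The one substantive difference is the bound on $D_{F^*}(-\eta\tilde z_1^t,-\eta\tilde z_1^{t-1})$. The paper writes out the Bregman divergence in closed form, factors it as $(1-\|a_t\|)\,\Theta(u,v)$, and then proves $\Theta(u,v)\leq\|u-v\|^2$ via the scalar inequality $\log(1+x)\geq x-x^2$ for $x\geq-\tfrac12$. You instead compute the Hessian $\nabla^2 F^*(v)$, bound its operator norm by $1/(1+\|v\|)$, and apply the integral Taylor remainder together with the segment estimate $1+\|w\|\geq\tfrac12(1+\|v_t\|)$. Both arguments use the same pointwise control $\eta\|\tilde z_t\|\leq\tfrac12(1+\|v_t\|)$ coming from $\eta d\leq\tfrac12$, and both land on the identical inequality $D_{F^*}\leq\eta^2(1-\|a_t\|)\|\tilde z_t\|^2$. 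Your Hessian route is a bit more modular and would transfer more easily to other radially symmetric regularizers; the paper's explicit computation avoids the second derivative and the small care needed to see that the Hessian formula (with its $\|v\|$ in the denominator) extends continuously to $v=0$.
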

\begin{proof}
First, it is clear that by playing on $\cA'$ instead of $\cA$, one incurs an extra $\gamma n$ regret. Second, note that $F$ is stricly convex (it is the composition of a convex and nondecreasing function with the euclidean norm), differentiable, and
\begin{equation} \label{eq:ballgradient}
\nabla F (x) = \frac{x}{1 - \|x\|}~.
\end{equation}
In particular $F$ is Legendre on $\cA = \{x \in \R^d: \|x\| \leq 1 \}$, and thus $F^*$ is differentiable on $\R^d$. Now the regret with respect to $\cA'$ can be bounded as follows, thanks to Theorem \ref{th:osmd}, 
$$\frac{\sup_{a \in \cA'} F(a) - F(a_1)}{\eta} + \frac{1}{\eta} \sum_{t=1}^n \E\,D_{F^*}\bigg(\nabla F(a_t) - \eta \tilde{z}_t, \nabla F(a_t)\bigg) .$$
The first term is clearly bounded by $\tfrac{1}{\eta}\log\tfrac{1}{\gamma}$ (we use the fact that $a_1 = 0$). For the second term we need to do a few computations (the first one follows from \eqref{eq:ballgradient} and the fact that $F$ is Legendre):
\begin{eqnarray*}
\nabla F^* (u) & = & \frac{u}{1 + \|u\|},\\
F^*(u) & = & - \log (1 + \|u\|) + \|u\|,\\
D_{F^*}(u, v) & = & \frac{1}{1+\|v\|} \left(\|u\| - \|v\| + \|u\| \cdot \|v\| - v^{\top} u - (1+\|v\|) \log \left( 1 + \frac{\|u\| - \|v\|}{1 + \|v\|}\right) \right).
\end{eqnarray*}
Let $\Theta(u,v)$ such that $D_{F^*}(u, v) =\frac{1}{1+\|v\|}  \Theta(u,v)$. First note that 
\begin{equation} \label{eq:superball}
\frac{1}{1+\|\nabla F (a_t)\|} = 1 - \| a_t \|~.
\end{equation}
Thus, in order to prove \eqref{eq:ballregret1} it remains to show that $\Theta(u,v) \leq \|u - v\|^2$, for $(u,v) = \bigl( - \eta \tilde{z}_1^t,  - \eta \tilde{z}_1^{t-1}\bigr)$. In fact we shall prove that this inequality holds true as soon as $\frac{\|u\| - \|v\|}{1 + \|v\|} \geq - \frac{1}{2} .$ This is the case for the pair $(u,v)$ under consideration, since by the triangle inequality, equations \eqref{eq:ballestimate} and \eqref{eq:superball}, and the assumption on $\eta$:
$$\frac{\|u\| - \|v\|}{1 + \|v\|}  \geq - \frac{\eta \|\tilde{z}_t\|}{1 + \|v\|} \geq - \eta d \geq - \frac{1}{2}.$$
Now using that $\log(1+x) \geq x -x^2$, $\forall x \geq - \frac12$, we obtain that for $u,v$ such that $\frac{\|u\| - \|v\|}{1 + \|v\|} \geq - \frac{1}{2}$,
\begin{eqnarray*}
\Theta(u,v) & \leq & \frac{(\|u\| - \|v\|)^2}{1 + \|v\|} + \|u\| \cdot \|v\| - v^{\top} u \\ 
& \leq & (\|u\| - \|v\|)^2 + \|u\| \cdot \|v\| - v^{\top} u \\
& = & \|u\|^2 + \|v\|^2 - \|u\| \cdot \|v\| - v^{\top} u \\
& = & \|u - v\|^2 + 2 v^{\top} u - \|u\| \cdot \|v\| - v^{\top} u \\
& \leq & \|u - v\|^2
\end{eqnarray*}
which concludes the proof of \eqref{eq:ballregret1}. Now for the proof of \eqref{eq:ballregret2} it suffices to note that:
$$\E\Bigl[\bigl(1 - \|a_t\|\bigr) \|\tilde{z}_t\|^2\Bigr] = (1- \|a_t\|) \sum_{i=1}^d  \frac{1 - \|a_t\|}{d} \frac{d^2}{(1 - \|a_t\|)^2} (z_t^{\top} e_i)^2 = d \|z_t\|^2 \leq d$$
along with straightforward computations.
\end{proof}

\subsection*{Acknowledgements} 
The first author would like to thank Csaba Szepesv{\'a}ri for bringing to his attention the problem of optimal regret on the Euclidean ball, as well as Alexander Rakhlin for illuminating discussions regarding sampling schemes. He also thank Ramon Van Handel, Vianney Perchet and Philippe Rigollet for stimulating discussions on this topic.

\bibliographystyle{plainnat}
\bibliography{newbib}
\end{document}